\documentclass[letterpaper, 10 pt, conference]{ieeeconf}  %

\IEEEoverridecommandlockouts                              %

\usepackage{amsmath} %
\usepackage{amssymb}  %
\usepackage{multirow}
\usepackage{multicol}
\usepackage{booktabs}
\usepackage{graphicx}
\usepackage{array}
\newcommand{\method}{POIL}
\newtheorem{theorem}{Theorem}
\newtheorem{lemma}[theorem]{Lemma}
\usepackage{pifont}
\usepackage{makecell}
\usepackage{xcolor}
\usepackage{hyperref}
\hypersetup{
  pdftitle={POIL: Point-based One-Shot Imitation Learning with Stable Dynamical Systems},
  pdfauthor={Sang Min Kim, Jinwoo Seo, Hyeongjun Heo, Junho Lee, Yonghyeon Lee, Young Min Kim},
  pdfkeywords={one-shot imitation learning, stable dynamical systems, point-based representation, robot manipulation}
}
\newcommand{\cmark}{\ding{51}}
\newcommand{\xmark}{\ding{55}}

\DeclareMathOperator*{\argmin}{arg\,min}

\title{\LARGE \bf
\method{}: Point-based One-Shot Imitation Learning \\with Stable Dynamical Systems
}

\author{%
Sang Min Kim$^{1}$,
Jinwoo Seo$^{1}$,
Hyeongjun Heo$^{1}$,
Junho Lee$^{1}$,
Yonghyeon Lee$^{2*}$, and
Young Min Kim$^{1*}$%
\thanks{%
$^{1}$Department of Electrical and Computer Engineering,
Seoul National University, Seoul, Republic of Korea.
$^{2}$Department of Artificial Intelligence,
Yonsei University, Seoul, Republic of Korea.
Junho Lee is now with the Autonomous Systems Lab,
Institute of Computer Technology, TU Wien, Vienna, Austria.
$^{*}$Co-corresponding authors:
\texttt{yonghyeon.lee@yonsei.ac.kr} and
\texttt{youngmin.kim@snu.ac.kr}.}%
}

\begin{document}

\maketitle
\thispagestyle{empty}
\pagestyle{empty}

\begin{abstract}
We present \method{}, a point-based one-shot imitation learning framework with stable dynamical systems.
While one-shot imitation avoids collecting extensive demonstrations, successful one-shot manipulation requires not only transferring a demonstrated trajectory to a novel object but also executing it robustly under changing scene conditions, grasp configurations, and external disturbances.
\method{} addresses both problems through a shared representation: a set of 3D points on the object's functional part, used jointly for trajectory transfer and closed-loop execution.
The one-shot transfer from the demonstrated trajectory is enabled with point correspondences.
\method{} grounds the shared functional part with a multi-modal large language model, and transfers the trajectory across viewpoint, pose, and object category changes.
During execution, multi-view tracking observes the same points online, and Point-set BCSDM drives them in closed loop by projecting per-point velocities onto a single rigid-body twist computed from the tracked points alone.
This extends stable dynamical models from an SE(3) pose to a point set without requiring a known 3D model or pose estimator.
We show that at the goal the controller becomes a gradient flow on the classical SO(3) potential, so its terminal phase inherits the almost-global convergence of that potential under a rigid-object assumption.
Across simulation and real-robot experiments, \method{} transfers a single demonstration across object category, grasp pose, and goal geometry, while recovering from external disturbances during execution.
Project page: \url{https://sangminkim-99.github.io/poil/}
\end{abstract}

\section{INTRODUCTION}

While recent data-driven methods perform tasks successfully in a novel environment, one still needs to acquire sufficient demonstration trajectories to adapt general models to the current setting~\cite{zitkovich2023rt, kim2024openvla, o2024open}.
One-shot imitation learning minimizes the requirement to a single demonstration, and aims to generalize the manipulation skills to unseen scenarios~\cite{finn2017one, duan2017one, vosylius2024instant}.
Table~\ref{tab:comparison} compares existing approaches along four axes: \textit{action-labeled pretraining}, \textit{object generalization}, \textit{closed-loop} execution, and a \textit{convergence guarantee}.
The first axis is whether the method requires action-labeled pretraining beyond the single demonstration itself.
Such pretraining is a powerful route to generalization, but the data remains costly to collect, and a method free of it stays applicable where the data is unavailable.
The second axis is how far the single demonstration generalizes over objects.
Learning-based approaches adapt a learned policy prior to the given demonstration~\cite{finn2017one,  duan2017one, vosylius2024instant}, inheriting both the pretraining requirement and generalize only within its data coverage.
Trajectory-transfer approaches instead directly map the demonstrated motion to a new scene through semantic correspondence~\cite{valassakis2022demonstrate, di2024dinobot, tang2025mimicfunc, liu2025one}, and can transfer a trajectory to novel geometry, appearance, or even a novel object category by identifying corresponding functional parts.
For the first two axes, we find that trajectory transfer is a natural fit.

Successful manipulation, however, requires more than transferring the demonstrated trajectory.
Most prior works on trajectory transfer focus on establishing static correspondences between the demonstration and the target scene, and execute the transferred sequence of actions as an open-loop replay~\cite{valassakis2022demonstrate, di2024dinobot, tang2025mimicfunc, liu2025one}.
The transferred actions must still adapt to the current scene context and execution conditions, where feasible grasp configurations, object placements, and external disturbances may cause the demonstrated execution to become infeasible or deviate from the desired motion.
Closed-loop control observes the object online and corrects such deviations from feedback.
Feedback alone, however, offers no assurance that the corrected motion actually reaches the goal, especially from states far outside the demonstration.
Stable dynamical systems provide exactly this convergence guarantee, continuously correcting deviations while ensuring convergence~\cite{khansari2011learning,lee2025behavior,li2025elastic}.
However, these methods have primarily focused on robust execution given an appropriate state representation and reference trajectory, whereas one-shot trajectory transfer has focused on constructing such trajectories for novel objects.
The integration of these two complementary capabilities, which would encompass all four axes in Table~\ref{tab:comparison}, is still largely unexplored.

To bridge trajectory transfer and stable execution, we present \method{}, a framework built on a unified representation: a set of 3D points on the object.
By anchoring these points directly to the object's functional part rather than the gripper, the policy deduces the desired movements of the designated part in the object-centric space, and can naturally generalize across different embodiments or object categories. 
After the corresponding points are established, the representation compactly summarizes the spatial configurations for physical execution, detached from the semantic reasoning.
Tracking these points under a stable controller closes the loop and provides a convergence guarantee at the goal without requiring dense 3D object geometry.

Building on the point-based representation, \method{} comprises two stages: transfer and execution.
The \textit{transfer} stage grounds the task-specific functional part with a Multi-modal Large Language Model (MLLM) and assigns corresponding points to it.
For example, we can transfer the skill of hanging a mug to a teapot by prompting the functional part, handles, of them, instead of matching the most similar visual feature of the entire object~\cite{zhang2024telling}.
Grounding by name stays robust to large viewpoint changes and broadens the class of target objects a single demonstration can reach.
The point motion is then extracted from the demonstration with a multi-view point tracker robust to large rotations~\cite{rajivc2025multi} and transferred to the target.
When the target goal geometry differs, we adapt the trajectory to the new fixture with a few user-clicked correspondences.
The \textit{execution} stage tracks the same points online and drives them along the transferred trajectory with \textbf{Point-set BCSDM}, our extension of Behavior Controllable Stable Dynamical Model (BCSDM)~\cite{lee2025behavior} from an SE(3) pose to points.
It applies BCSDM to each point and projects the per-point velocities onto a single rigid-body twist for the gripper, and we show that this projection is a gradient descent on a point-set alignment cost, so the terminal phase inherits the almost-global convergence of the classical SO(3) potential without access to the full 3D model.
Since the controller acts on object points rather than the demonstrated gripper pose, \method{} can start from any feasible grasp; we select one by simulating rollouts in advance and rejecting grasps that violate joint limits or collide with the scene.

Experiments in simulation and on a real robot demonstrate that \method{} transfers a single demonstration across object category, grasp pose, and goal geometry, while recovering from external disturbances.
Our contributions are threefold:
(i) We propose \method{}, which attains both generalization and a convergence guarantee from a single demonstration by sharing the point-set representation between transfer and execution.
(ii) We introduce Point-set BCSDM, extending BCSDM~\cite{lee2025behavior} from SE(3) pose trajectories to points, and show that its terminal phase is almost-globally stable.
(iii) We design a name-based functional-part transfer with MLLMs that establishes correspondence under large pose change, where dense feature matching struggles.

\begin{table}[t]
\centering
\caption{One-shot imitation learning approaches for manipulation, compared along the four axes discussed in Sec.~I.}
\label{tab:comparison}
\footnotesize
\setlength{\tabcolsep}{2.5pt}
\renewcommand{\arraystretch}{1.3}
\begin{tabular}{l c c c c}
\toprule
\textbf{Approach} & \textbf{\makecell{No action\\pretrain.}} & \textbf{\makecell{Object\\gen.}} & \textbf{\makecell{Closed\\loop}} & \textbf{\makecell{Conv.\\guarantee}} \\
\midrule
Meta / in-context~\cite{finn2017one, vosylius2024instant} & \xmark & in-dist. & \cmark & \xmark \\
Correspondence~\cite{tang2025mimicfunc, liu2025one} & \cmark & func.\ part$^\ddagger$ & \xmark & \xmark \\
Alignment~\cite{valassakis2022demonstrate, di2024dinobot} & \cmark & intra-cat. & \cmark$^\dagger$ & \xmark \\
Analytical SDS~\cite{khansari2011learning, lee2025behavior} & \cmark & \xmark & \cmark & \cmark \\
\midrule
\textbf{\method{} (ours)} & \cmark & func.\ part$^\ddagger$ & \cmark & \cmark \\
\bottomrule
\multicolumn{5}{l}{\footnotesize $^\dagger$Closed-loop alignment to an intermediate pose, then open-loop replay.} \\
\multicolumn{5}{l}{\footnotesize $^\ddagger$Requires geometrically similar parts.}
\end{tabular}
\end{table}

\begin{figure*}[!ht]
    \centering
     \includegraphics[width=1.0\linewidth]{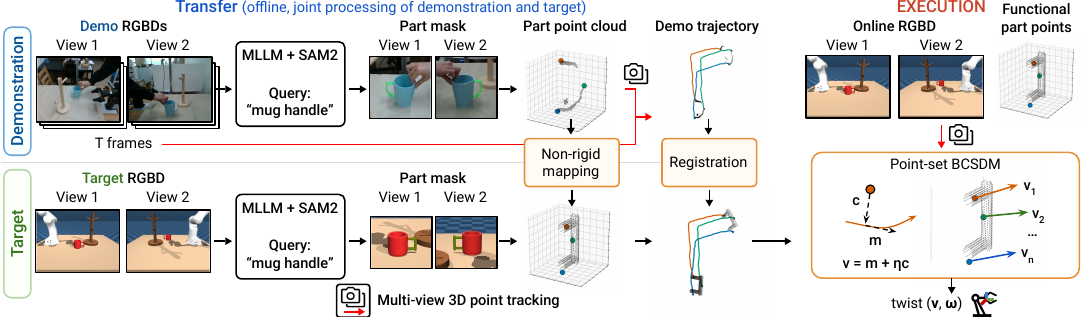}
    \caption{Overview of \method{}. The transfer stage grounds the functional part, establishes point correspondences, and transfers the demonstrated point trajectory to the target scene. The execution stage tracks the same points online and drives them with Point-set BCSDM.}
    \label{fig:pipeline}
\end{figure*}

\section{RELATED WORKS}

\subsection{One-Shot Imitation Learning}
One-shot imitation transfers a skill from a single demonstration to novel scenes.
Existing approaches broadly fall into two directions: learning a generalizable policy from prior data, or directly transferring the demonstrated trajectory to a novel scene.
Learning-based methods adapt from a single demonstration via meta-learning or in-context conditioning~\cite{duan2017one, finn2017one, vosylius2024instant}.
Their generalization comes from pretraining on large-scale action-labeled data, which is far more costly to collect than image or text data.
Thus, the policy's generalizability is still restricted by the coverage of this data, and there is no convergence guarantee.
In contrast, other methods transfer the demonstration without training.
Correspondence-based methods map the demonstration to the target via a shared functional part, generalizing even across categories, but replay the motion open-loop, without correcting deviations during execution~\cite{tang2025mimicfunc, liu2025one}.
Alignment-based methods close the loop only to reach a predefined starting pose, then replay open-loop, and their instance-level alignment confines them to intra-category objects~\cite{valassakis2022demonstrate, di2024dinobot}.

\subsection{Keypoint Representation for Robotic Manipulation}
Keypoints provide a compact and object-centric interface for manipulation~\cite{manuelli2019kpam, gao2021kpam, gao2023k, huang2024rekep, liu2024moka, haldar2025point}.
Obtaining them often relies on supervision specific to a task or object category: kPAM~\cite{manuelli2019kpam} trains a category-specific detector, KETO~\cite{qin2020keto} needs tool-specific demonstrations, and K-VIL~\cite{gao2023k} transfers only within an object category.
More recently, foundation models build keypoint-based task structure with less supervision — spatio-temporal constraints in ReKep~\cite{huang2024rekep}, affordance marks in MOKA~\cite{liu2024moka} — inferring the task without demonstrations, orthogonal to transferring a demonstrated skill.
Closest to ours, RoboTAP~\cite{vecerik2024robotap} tracks points to follow a demonstrated trajectory in closed loop, but on a fixed instance, and Point Policy~\cite{haldar2025point} uses points for both observation and control, but learns a behavior-cloning policy from many demonstrations; neither offers a convergence guarantee.

\subsection{Stable Dynamical Systems}
A stable dynamical system encodes a demonstration as a time-independent velocity field whose solutions converge to the goal from any initial state, correcting deviations along the way.
Built on these formulations, stable dynamical systems have deduced convergence guarantees and been widely adopted in robotic manipulation~\cite{saveriano2023dynamic}.
These methods primarily address the execution problem, assuming that an appropriate state representation and reference trajectory are already available.
Applied to manipulation, the state is a single rigid-body SE(3) pose assigned to one body: either the object, which requires a known 3D model and runtime pose estimation, or the gripper, under the assumption that the object is rigidly held in a fixed grasp~\cite{li2025elastic, lee2025behavior}.
Behavior-Controllable Stable Dynamical Models (BCSDM)~\cite{lee2025behavior} realize such convergence on SE(3) by driving the gripper pose, and interpolates between mimicking and contracting behaviors through a single parameter.
We instead combine stable execution with one-shot trajectory transfer through an object-centric point representation, extending the convergence guarantee to transferred point trajectories without requiring estimated poses or full 3D geometry.
With our semantic transfer stage, the formulation extends the convergence guarantee to novel objects; we also observe empirically that execution tolerates mild non-rigidity (Sec.~\ref{exp:control}, Hanging Bag), though our analysis assumes a rigid object.

\section{METHOD}
\method{} takes a single demonstration and transfers the manipulation skill to a novel target object that shares a functional part, executing the task via a closed-loop controller with convergence guarantees (see Fig.~\ref{fig:pipeline}).
The two stages share one representation -- a set of 3D points on the object's functional part -- so that a single abstraction yields both the object generalization of the transfer stage and the object-centric convergence guarantee of the execution stage.
Formally, the demonstration is a multi-view RGB-D video sequence composed of frames of image and depth observations $\{(I_{n,\tau}^\text{src}, D_{n,\tau}^\text{src})\}$, where $n \in [1, N]$ is the index to a camera and $\tau\in[1, T]$ is the time stamp.
During the demonstration, the robot or a human hand manipulates an object.
At test time, we observe a target scene $(I_{n}^\text{tgt}, D_{n}^\text{tgt})$ with a novel object, where the robot tries to grasp the target object and drives the functional part along the transferred trajectory.

\subsection{Point Trajectory Transfer}
\label{method:point_trajectory_transfer}

\method{} transfers demonstrations through functional-part point trajectories, which capture task-relevant object motion while remaining independent of gripper pose and whole-object appearance.
We obtain this object-centric trajectory by detecting the functional part, aligning the demonstration and target part point clouds, and recovering point motion from the demonstration.
When the target fixture differs from the demonstration, we further adapt the trajectory to the new fixture.

We localize the functional part by its semantic label rather than its appearance, whereas dense image features drift and fail to match across such changes~\cite{zhang2024telling}.
Given demonstration and target object labels and a shared part label, we detect the part with Rex-Omni~\cite{jiang2025detect}, an MLLM-based open-vocabulary detector that returns a bounding box from an image and a text label.
In each scene, we query every view with the object label to crop the object.
Querying each crop separately for the part, however, often returns a spurious box when the part is self-occluded in that view (Fig.~\ref{fig:tvp}).
We therefore introduce \textit{multi-view prompting}: we concatenate all $N$ object crops into a single image grid and issue one part query, which suppresses these errors.
We convert the bounding boxes into part masks using SAM2~\cite{ravi2024sam}, and unproject them with depth into 3D part point clouds $\mathcal{P}_{\text{demo}}$ and $\mathcal{P}_{\text{tgt}}$.
On $\mathcal{P}_{\text{demo}}$ we sample $N_k$ points $\{\mathbf{x}_i^{\text{demo}}\}_{i=1}^{N_k}$ by farthest-point sampling, the object-centric point-set we transfer in the next steps.

\begin{figure}
    \centering
    \includegraphics[width=1.0\linewidth]{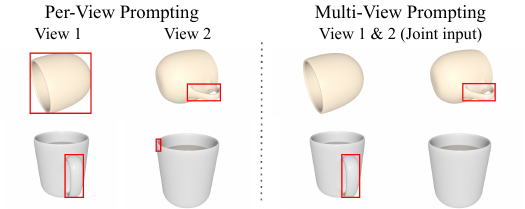}
    \caption{Handle detection from multi-view mug images. Multi-view prompting enables Rex-Omni to localize the handle using cross-view context.}
    \label{fig:tvp}
\end{figure}

With the functional part localized, the remaining problem is geometric: assuming the functional part keeps a similar structure across objects, a dense correspondence between the demonstration and target parts exists to transfer the points.
We recover this correspondence by non-rigid registration. 
Without a shared frame, however, it is ill-posed, collapsing to the least-deformation solution instead of the correct one.
We therefore first fit a coarse transform $T$ -- a rigid transform $(R, \mathbf{t}) \in SE(3)$ with anisotropic scaling along the principal axes of $\mathcal{P}_{\text{demo}}$.
We initialize $T$ by searching rotation and translation candidates, scoring each by Chamfer distance between $\mathcal{P}_{\text{demo}}$ and $\mathcal{P}_{\text{tgt}}$, then refine the top $K$ by gradient descent and select the best as $T$~\cite{kim2025learning}.
We then refine the alignment with occlusion-aware non-rigid registration (OAReg)~\cite{zhao2025occlusion}, which optimizes a deformation field over the coarsely aligned parts.
Passing the sampled demonstration points $\{\mathbf{x}_i^{\text{demo}}\}_{i=1}^{N_k}$ through this field yields the target points $\{\mathbf{x}_i^{\text{tgt}}\}_{i=1}^{N_k}$.

After establishing which functional-part points correspond, we recover how those points move in the demonstration.
This temporal point trajectory is the object-centric motion that will later be followed by the controller.
Because the object can undergo large out-of-plane rotations, single-view point trackers often lose these points~\cite{karaev2024cotracker, karaev2025cotracker3}.
We therefore track them with MVTracker~\cite{rajivc2025multi}, a multi-view RGB-D tracker stable under such motion.
At each timestep, we rigidly align the target points $\{\mathbf{x}_i^{\text{tgt}}\}_{i=1}^{N_k}$ to the tracked demonstration configuration by the Kabsch method~\cite{kabsch1976solution}, yielding transferred point trajectories $\{\mathbf{x}^{\text{tr}}_i(\tau)\}_{i=1}^{N_k}$.
When the target fixture matches the demonstration, these trajectories directly become the controller reference, $\mathbf{x}^{*}_i(\tau) = \mathbf{x}^{\text{tr}}_i(\tau)$.

When the fixture changes shape or pose, directly reusing this trajectory can miss the target geometry, so we non-rigidly adapt it to the new fixture (Fig.~\ref{fig:tps}).
Large fixture deformations make fully automatic registration unreliable~\cite{zhao2025occlusion}, so we ask the user to click a few corresponding point pairs on the two fixtures.
We use the clicked points to segment the fixtures with SAM2~\cite{ravi2024sam} and depth unprojection, then run OAReg~\cite{zhao2025occlusion} on the resulting fixture point clouds.
During OAReg optimization, we add a sparse correspondence loss that aligns the clicked 3D point pairs.
The optimized deformation field yields dense correspondences between the fixtures.
We fit a Thin Plate Spline (TPS)~\cite{bookstein1989principal} $\Phi: \mathbb{R}^3 \to \mathbb{R}^3$ to these correspondences and apply it to every point in the transferred trajectories, yielding the adapted reference $\mathbf{x}^{*}_i(\tau) = \Phi(\mathbf{x}^{\text{tr}}_i(\tau))$.

\begin{figure}[t]
\centering
\includegraphics[width=1.0\linewidth]{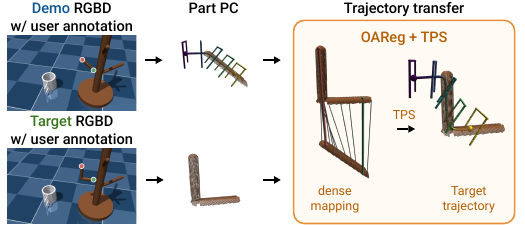}
\caption{Trajectory adaptation to a new fixture. User-clicked correspondences between the demonstration's straight rack (left) and the target's L-shaped rack (right) define a TPS warp $\Phi$. Applying $\Phi$ to every waypoint of the reference trajectory yields the adapted trajectory in the target scene.}
\label{fig:tps}
\end{figure}

\subsection{Closed-Loop Execution via Point-set BCSDM}\label{method:control}
By tracking and controlling functional-part points directly, \method{} executes the transferred trajectory in closed loop with a convergence guarantee, without a known 3D model or run-time pose estimation.
The execution stage proceeds in three steps: (i) offline grasp selection to identify a feasible initial grasp, (ii) online point tracking to update the functional part point states in real time, and (iii) Point-set BCSDM control to drive the object along the transferred trajectory.

\textbf{Grasp selection.}
\method{}'s object-centric point representation provides inherent grasp invariance during execution.
Because the controller acts on object points rather than the demonstrated gripper pose, the robot can select any feasible grasp on the novel object, avoiding the demonstrated grasp when it is in collision or violates joint limits.
When the functional part is symmetric under a 180$^\circ$ rotation (detected by a Chamfer test after flipping about each axis) about one of its principal axes, the point correspondence from Sec.~\ref{method:point_trajectory_transfer} is ambiguous: the mirrored matching fits the part equally well but yields a different reference trajectory.
We therefore roll out each candidate grasp under every symmetric matching as well.
Along each simulated path we check inverse-kinematics feasibility and approximate robot-scene collision between robot link spheres and the scene point cloud.
Each grasp-matching pair is scored by joint-limit and collision violations, and the lowest-cost pair is selected for execution.

\textbf{Online point tracking.}
The point-based representation gives the controller a feedback state on a novel object without estimating its full pose or 3D model.
We track the transferred points online with the same multi-view point tracker~\cite{rajivc2025multi} used in the transfer stage.
To improve robustness, we additionally track points sampled on the whole object as support points~\cite{karaev2024cotracker}, which stabilize the part-point estimates under occlusion and large rotations, while only the part points drive the controller.
Because the tracker runs at a lower frequency than the control loop, we propagate the point positions between tracker updates using the end-effector motion computed from robot forward kinematics (FK), and reset them to the tracker's output whenever a new frame arrives.

\textbf{Point-set BCSDM controller.}
Point-set BCSDM turns the reference point trajectory into closed-loop velocities for the functional-part points.
Given reference trajectories $\{x_i^*(\tau)\}_{i=1}^{N_k}$ and current point positions $\{x_i\}_{i=1}^{N_k}$, we index the targets by $\tau$ and define
\begin{equation}\label{eq:tau}
\tau^* = \argmin_{\tau} \sum_i \bigl\| x_i - x_i^*(\tau) \bigr\|^2
\end{equation}
as the closest target configuration to the current one.
Each point is driven by a velocity field
\begin{equation}\label{eq:vfield}
v_i = \dot{x}_i^*(\tau^*) + \eta\,\bigl( x_i^*(\tau^*) - x_i \bigr),
\end{equation}
which interpolates between mimicking the target velocity ($\eta \to 0$) and contracting toward it (large $\eta$), following BCSDM~\cite{lee2025behavior}.
The robot holds the object with a fixed grasp, so it can only move all points together as one rigid motion.
We therefore find the single twist $(v, \omega)$ that best matches the per-point velocities $\{v_i\}$ in the least-squares sense,
\begin{equation}\label{eq:twist}
(v, \omega) = \argmin_{v, \omega} \sum_i
\bigl\| v + \omega \times (x_i - \bar{x}) - v_i \bigr\|^2,
\end{equation}
about the centroid $\bar{x} = \frac{1}{N_k} \sum_i x_i$. 
In implementation, we robustly integrate this fit with RANSAC~\cite{fischler1981random} to reject outlier point velocities before the final least-squares solve.
We then map this twist to the gripper wrist through the fixed grasp transform and send it to the robot.

\textbf{Stability.}
We assume a rigid object, $x_i = R\,x_i^{\mathrm{b}} + t$ with fixed body-frame points $x_i^{\mathrm{b}}$, and analyze the terminal phase, where $\tau^* = T$, $\dot{x}_i^*(\tau^*) = 0$, and $\eta$ only rescales time, so we set $\eta = 1$.
We analyze the least-squares solve~\eqref{eq:twist} without RANSAC; under the rigid-object assumption all per-point velocities agree with a single twist, so RANSAC rejects nothing.
The twist decreases
\begin{equation}\label{eq:D}
D = \frac{1}{2} \sum_i \bigl\| x_i - x_i^*(\tau^*) \bigr\|^2,
\end{equation}
which we verify below.
Translation is a strictly convex quadratic and trivially stable, so we focus on rotation.
Taking the body frame at the target ($x_i^* = x_i^{\mathrm{b}}$, hence $R = I$ at the target) and centering ($\sum_i x_i^{\mathrm{b}} = 0$),
\begin{equation}\label{eq:DR}
D(R) = \operatorname{tr}(W) - \operatorname{tr}(WR),
\qquad
W = \sum_i x_i^{\mathrm{b}} x_i^{\mathrm{b}\top} \succ 0.
\end{equation}
Let $[v]$ denote the skew-symmetric matrix with $[v]\,u = v \times u$ and $(\cdot)^\vee$ its inverse.
Substituting $v_i = x_i^{\mathrm{b}} - R\,x_i^{\mathrm{b}}$ into~\eqref{eq:twist}, the normal equations in the body frame have Gram matrix $A = \operatorname{tr}(W)\,I - W \succ 0$, the unit-mass inertia of the point set, and right-hand side equal to the skew part of $WR$.
The closed-loop rotation is therefore $\dot{R} = R\,[\omega(R)]$ with
\begin{equation}\label{eq:omega}
\omega(R) = A^{-1}\big(R^\top W - W R\big)^{\vee},
\quad
\dot{D} = -\|\omega(R)\|_A^2 \le 0,
\end{equation}
where $\|u\|_A^2 = u^\top A\, u$.
That is, the least-squares twist is the gradient descent of $D$ in the metric $A$, the kinetic-energy-metric gradient of~\cite{koditschek1989application} with the point-set inertia in place of the body inertia.
$D$ is the potential $\operatorname{tr}(W(I-R))$ proposed in~\cite{koditschek1989application} as a navigation function on $SO(3)$.
Unlike the geodesic cost of BCSDM~\cite{lee2025behavior}, it has three critical points besides $I$; it is known that none is a local minimum~\cite{chillingworth1983symmetry}, so the gradient flow converges almost-globally~\cite{koditschek1989application,chaturvedi2011rigid}.
We restate the argument for our point-set parameterization, where $W$ comes from tracked points only, assuming general position: $W$ has distinct positive eigenvalues $\lambda_1 < \lambda_2 < \lambda_3$.
\begin{theorem}\label{thm:critical_points}
$\omega(R) = 0$ iff $R \in \{I, R^*_1, R^*_2, R^*_3\}$, where $R^*_k$ is the rotation by $\pi$ about the $k$-th principal axis of $W$.
\end{theorem}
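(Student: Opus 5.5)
Since $A \succ 0$ is invertible, $\omega(R)=0$ holds iff $(R^\top W - WR)^\vee = 0$, i.e.\ iff $WR$ is symmetric, because $R^\top W = (WR)^\top$. So the plan is to classify all $R\in SO(3)$ for which $WR = R^\top W$. First diagonalize $W = Q\Lambda Q^\top$ with $Q\in SO(3)$ (flip one column if needed) and $\Lambda=\mathrm{diag}(\lambda_1,\lambda_2,\lambda_3)$. Set $\tilde R = Q^\top R Q\in SO(3)$. Then the condition becomes $\Lambda\tilde R$ symmetric, so we may assume $W=\Lambda$ is diagonal.

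\textbf{Step 1 (the listed rotations are critical).} For $R=I$, $\Lambda$ is symmetric. For $R^*_k$ in the diagonal frame, $R^*_k$ is diagonal with entry $+1$ at position $k$ and $-1$ elsewhere. Then $\Lambda R^*_k$ is diagonal, hence symmetric.

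\textbf{Step 2 (no other critical points).} This is the main step. Let $S=\Lambda R$ be symmetric. Then $S^2 = S^\top S = R^\top\Lambda^2 R$, so $S^2$ has eigenvalues $\lambda_1^2,\lambda_2^2,\lambda_3^2$, which are distinct. A symmetric $S$ shares eigenvectors with $S^2$, and distinct eigenvalues of $S^2$ force each eigenvector to be unique up to sign. Hence $S=V\,\mathrm{diag}(\sigma_1\lambda_1,\sigma_2\lambda_2,\sigma_3\lambda_3)V^\top$ with $\sigma_j\in\{\pm1\}$ and $V$ orthogonal. Here $V^\top$ diagonalizes $R^\top\Lambda^2R$, so $V=R^\top P$ with $P$ a signed identity matrix (distinct eigenvalues again).

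Substituting gives
\[
\Lambda R = R^\top P\,\Sigma\Lambda\,P^\top R = R^\top \Sigma\Lambda R,
\]
with $\Sigma=\mathrm{diag}(\sigma_j)$, since diagonal matrices commute. This yields $R\Lambda R = \Sigma\Lambda R$, i.e.\ $R\Lambda = \Sigma\Lambda$. Since $\Lambda$ is invertible, $R=\Sigma$.

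$\det R = 1$ forces an even number of $-1$ entries in $\Sigma$. This leaves exactly $I$ and the three $\pi$-rotations $\mathrm{diag}(1,-1,-1)$, $\mathrm{diag}(-1,1,-1)$, $\mathrm{diag}(-1,-1,1)$, i.e.\ $R^*_1,R^*_2,R^*_3$. Finally, undo the change of frame: $R = Q\tilde R Q^\top$ is the rotation by $\pi$ about the $k$-th column of $Q$, which is the $k$-th principal axis of $W$.

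\textbf{Main obstacle.} The hard part is justifying Step 2 cleanly: that symmetry of $\Lambda R$ forces $R$ to be diagonal. Distinctness of the $\lambda_j$ and positivity of $W$ (so that $\lambda_j^2$ are distinct and $\Lambda$ is invertible) are essential here.

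A more elementary fallback is to write out the off-diagonal conditions $\lambda_i R_{ij} = \lambda_j R_{ji}$. These give $R_{ji}=(\lambda_i/\lambda_j)R_{ij}$. Combine them with orthonormality of rows and columns: compare $\sum_j R_{ij}^2 = 1 = \sum_j R_{ji}^2$ and use the strict ordering of the $\lambda$'s, starting from the extreme index $i=3$ or $i=1$. This shows all off-diagonal entries vanish. I would try the eigenvector argument first and keep this entrywise check as backup.
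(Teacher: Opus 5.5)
Your proof is correct and follows essentially the paper's route: symmetry of $WR$, together with the simple spectrum $\{\lambda_k^2\}$ of its square, forces $R$ to be a signed diagonal matrix, and $\det R = 1$ then leaves the four rotations. The paper shortens your Step~2 by using $S^2 = SS^\top = \Lambda R R^\top \Lambda = \Lambda^2$ instead of $S^\top S$; then $S$ commutes with $\Lambda^2$ and is immediately diagonal, so your detour through $V = R^\top P$ is unnecessary.
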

This is the $P = W$ case of~\cite{koditschek1989application}, due to~\cite{chillingworth1983symmetry}; we include a short proof.

\begin{proof}
$\omega(R) = 0$ iff $WR = R^\top W = (WR)^\top$, so $WR$ is symmetric and $(WR)^2 = W R R^\top W = W^2$.
In the principal frame $W^2$ has distinct diagonal entries, so $WR$, which commutes with $W^2$, is diagonal with entries $\pm\lambda_k$.
Then $R = W^{-1}(WR) = \operatorname{diag}(\pm1,\pm1,\pm1)$ and $\det R = 1$ leaves the four rotations stated.
\end{proof}
\begin{lemma}\label{lem:escape}
Each $R^*_k$ is a nondegenerate critical point of $D$ and not a local minimum.
\end{lemma}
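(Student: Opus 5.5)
We compute the Hessian of $D(R)=\operatorname{tr}(W)-\operatorname{tr}(WR)$ at each $R^*_k$ along the exponential chart and read off its eigenvalues. Work in the principal frame of $W$, so $W=\operatorname{diag}(\lambda_1,\lambda_2,\lambda_3)$ and $R^*_k$ is diagonal with $+1$ in slot $k$ and $-1$ in the other two slots. Perturb as $R=R^*_k\exp([u])$ with $u\in\mathbb{R}^3$. Since $\exp([u])=I+[u]+\tfrac12[u]^2+O(|u|^3)$, we get
\[
D(R^*_k\exp([u]))=D(R^*_k)-\operatorname{tr}(WR^*_k[u])-\tfrac12\operatorname{tr}(WR^*_k[u]^2)+O(|u|^3).
\]
Write $M=WR^*_k$. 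It is diagonal, hence symmetric, with entries $m_k=\lambda_k$ and $m_j=-\lambda_j$ for $j\neq k$.

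The linear term vanishes because $M$ is symmetric and $[u]$ is skew. This is consistent with Theorem~\ref{thm:critical_points}, and because we expand $D$ directly we do not need to pass through the metric $A$. For the quadratic term we use $[u]^2=uu^\top-|u|^2I$, which gives $\operatorname{tr}(M[u]^2)=u^\top Mu-\operatorname{tr}(M)|u|^2$. The Hessian is therefore
\[
H_k=\operatorname{tr}(M)\,I-M=\operatorname{diag}\bigl(\textstyle\sum_{j\neq i}m_j\bigr)_{i=1,2,3}.
\]
Its diagonal entries are as follows.
\begin{itemize}
\item Entry $k$ equals $-\sum_{j\neq k}\lambda_j$, which is strictly negative.
\item For $i\neq k$, let $l$ be the third index. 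Entry $i$ equals $m_k+m_l=\lambda_k-\lambda_l$.
\end{itemize}
By general position every entry is nonzero, so each $R^*_k$ is a nondegenerate critical point. Nondegeneracy is intrinsic: the Hessian at a critical point does not depend on the chart, so computing in exponential coordinates suffices.

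The entry along axis $k$ is negative for every $k$. This already shows that $R^*_k$ is not a local minimum: the curve $t\mapsto R^*_k\exp(t[e_k])$ strictly decreases $D$ to second order. Concretely, rotating about the $k$-th axis itself moves the angle away from $\pi$ toward $0$, and along that curve $D$ drops.

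As a consistency check, the Morse indices should be $1,2,3$ for $k=1,2,3$, which is compatible with the Morse count on $SO(3)$.
\begin{itemize}
\item $k=1$: the other entries are $\lambda_1-\lambda_3<0$ and $\lambda_1-\lambda_2<0$, so $R^*_1$ is a maximum (index $3$).
\item $k=3$: the other entries are both positive, so $R^*_3$ has index $1$.
\item $k=2$: the entries have mixed signs, so $R^*_2$ has index $2$.
\end{itemize}

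The main obstacle is bookkeeping signs in the second-order expansion. That means fixing the convention $R^*_k\exp([u])$ versus $\exp([u])R^*_k$, and getting the identity for $[u]^2$ right. I would verify it on the simple case $R^*_k=I$, where $H=\operatorname{tr}(W)I-W=A\succ0$; this matches the minimum at $I$ and the Gram matrix in \eqref{eq:omega}.
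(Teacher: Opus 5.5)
Your proof is correct and is essentially the paper's argument: expand $D(R^*_k\exp([u]))$ to second order using $[u]^2 = uu^\top - \|u\|^2 I$ and $\operatorname{tr}(WR^*_k[u]) = 0$, obtain the Hessian $\operatorname{tr}(WR^*_k)I - WR^*_k$, and read off its diagonal entries $-\lambda_i-\lambda_j<0$, $\lambda_k-\lambda_j$, $\lambda_k-\lambda_i$ in the principal frame. Your extra Morse-index check is correct in its bullets (indices $3,2,1$ for $k=1,2,3$), but the sentence introducing it lists them as $1,2,3$ for $k=1,2,3$, which holds only as a set.
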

\begin{proof}
Expanding $D(R^*\exp([\xi]))$ to second order with $[\xi]^2 = \xi\xi^\top - \|\xi\|^2 I$ and $\operatorname{tr}(WR^*[\xi]) = 0$ gives Hessian $H(R^*) = \operatorname{tr}(WR^*)I - WR^*$, the Morse-index matrix of~\cite{chillingworth1983symmetry}.
In the principal frame $WR^*_k$ is diagonal with $\lambda_k$ in position $k$ and $-\lambda_i, -\lambda_j$ elsewhere ($\{i,j\} = \{1,2,3\}\setminus\{k\}$), so $H(R^*_k)$ is diagonal with entries $-\lambda_i - \lambda_j < 0$, $\lambda_k - \lambda_j$, and $\lambda_k - \lambda_i$.
The last two are nonzero by distinctness, so $H(R^*_k)$ is nonsingular, and the first is negative, so $R^*_k$ is not a minimum.
\end{proof}
\begin{theorem}\label{thm:agas}
The rotational dynamics are almost-globally asymptotically stable at $R = I$.
\end{theorem}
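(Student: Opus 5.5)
The plan is a standard gradient-flow argument on the compact manifold $SO(3)$, using $D$ from~\eqref{eq:DR} as a Lyapunov function.

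\textbf{Step 1: $D$ is a Lyapunov function and the flow converges to equilibria.} By~\eqref{eq:omega}, $\dot D = -\|\omega(R)\|_A^2 \le 0$ with $A \succ 0$. We have $D(I)=0$, and $D(R) = \tfrac12\sum_i \|(R-I)x_i^{\mathrm b}\|^2 > 0$ for $R\neq I$, since $W \succ 0$ means the points span $\mathbb{R}^3$. $SO(3)$ is compact and the vector field $R[\omega(R)]$ is smooth, so solutions exist for all time. LaSalle's invariance principle then says every trajectory converges to the largest invariant set contained in $\{\dot D = 0\} = \{\omega = 0\}$. By Theorem~\ref{thm:critical_points}, this set is the finite collection $\{I, R^*_1, R^*_2, R^*_3\}$. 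Because the equilibria are isolated and the $\omega$-limit set is connected, each trajectory converges to exactly one of them.

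\textbf{Step 2: local asymptotic stability of $I$.} $I$ is the strict global minimum of $D$. Its Hessian is $H(I) = \operatorname{tr}(W)I - W = A \succ 0$, obtained from the same expansion used in Lemma~\ref{lem:escape}. So $D$ is a strict local Lyapunov function with $\dot D<0$ on a punctured neighbourhood of $I$, and $I$ is locally asymptotically stable.

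\textbf{Step 3: the basin of attraction is of full measure (main obstacle).} The flow is a gradient flow of $D$ in the Riemannian metric induced by $A$. Write it as $\dot R = -\operatorname{grad}_A D$, using the identification $\omega = -A^{-1}\,dD$, which is implicit in~\eqref{eq:omega}. The linearization at a critical point $R^*_k$ is then $-A^{-1}H(R^*_k)$ in body coordinates.

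By Lemma~\ref{lem:escape}, $H(R^*_k)$ is nonsingular with at least one negative eigenvalue. Since $A\succ0$, the matrix $A^{-1}H$ is similar to $A^{-1/2}HA^{-1/2}$, which is congruent to $H$. By Sylvester's law of inertia, the linearization is hyperbolic and has at least one unstable direction. The stable manifold theorem then gives each $R^*_k$ a stable manifold that is an immersed submanifold of dimension at most $2$. Such a set has Haar measure zero in the $3$-dimensional $SO(3)$.

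The technical point I expect to need care is the justification that the set of initial conditions converging to $R^*_k$ equals this global stable manifold $W^s(R^*_k)=\bigcup_{t\ge0}\phi_{-t}(W^s_{\mathrm{loc}})$. Each backward-time image $\phi_{-t}$ is a diffeomorphism, so it preserves measure-zero sets. A countable union over $t\in\mathbb{N}$ therefore still has measure zero.

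Combining the three steps: the complement of $\bigcup_k W^s(R^*_k)$ is open, dense, and of full measure. Every trajectory starting there converges to $I$, which is almost-global asymptotic stability.
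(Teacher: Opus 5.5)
Your proposal is correct and follows essentially the same route as the paper. $D$ is a Morse function on $SO(3)$ whose $A$-metric gradient flow is the closed loop, $I$ is its unique minimum with $H(I)=A\succ 0$, and the other three critical points have stable manifolds of dimension at most two, hence of measure zero. The paper compresses this into a citation of Koditschek's result, whereas you spell out three pieces it leaves implicit: the LaSalle convergence-to-equilibria step, the hyperbolicity of $-A^{-1}H(R^*_k)$ via Sylvester's law, and the countable-union measure-zero argument. The openness of the good set, which you assert without justification, follows because by Step~1 it coincides with the basin of the asymptotically stable equilibrium $I$.
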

\begin{proof}
By Theorem~\ref{thm:critical_points} and Lemma~\ref{lem:escape}, and since $H(I) = A \succ 0$, $D$ is a Morse function on $SO(3)$ with the unique minimum $I$, and~\eqref{eq:omega} is its gradient flow in the metric $A$.
The stable manifolds of the three non-minimum critical points have dimension at most two, so all trajectories outside a measure-zero set converge to $I$~\cite{koditschek1989application}.
\end{proof}

\section{EXPERIMENTS}

We organize the evaluation into four studies.
We isolate the performance of functional-part point transfer (Sec.~\ref{exp:keypoint_transfer}) and control (Sec.~\ref{exp:control}) with ground-truth inputs.
We then evaluate the full pipeline (Sec.~\ref{exp:e2e}) across pose, grasp, object, and goal variations, as well as external disturbances.
We use $N_k=10$ functional-part points, $\eta=3$, and 40 support points for all experiments.

\begin{table}[t]
\centering
\caption{Control robustness under varying initial poses.
Success rate (\%) over 100 trials.
Near denotes the region used to collect demonstrations for PP-50, and Far samples poses outside it.}
\label{tab:control}
\footnotesize
\setlength{\tabcolsep}{3pt}
\renewcommand{\arraystretch}{1.1}
\begin{tabular}{l c cc cc cc cc}
\toprule
\multirow{2}{*}[-0.4ex]{Policy} & \multirow{2}{*}[-0.4ex]{\#Demos}
& \multicolumn{2}{c}{PnP Cube}
& \multicolumn{2}{c}{Mug Insert.}
& \multicolumn{2}{c}{Reshelving}
& \multicolumn{2}{c}{Hanging Bag} \\
\cmidrule(lr){3-4} \cmidrule(lr){5-6} \cmidrule(lr){7-8} \cmidrule(lr){9-10}
& & Near & Far & Near & Far & Near & Far & Near & Far \\
\midrule
PP~\cite{haldar2025point}       & 1  & 100 & 54  & 33  & 25  & 71  & 31  & 51  & 45 \\
PP~\cite{haldar2025point}       & 50 & 100 & 100 & 100 & 84  & 100 & 57  & 100 & 100 \\
IP~\cite{vosylius2024instant}   & 1  & 100 & 100 & 0   & 0   & 4   & 3   & 23  & 19 \\
\midrule
Ours & 1 & 100 & 100 & 100 & 100 & 100 & 100 & 100 & 100 \\
\bottomrule
\multicolumn{10}{l}{\footnotesize PP: Point Policy~\cite{haldar2025point}, IP: Instant Policy~\cite{vosylius2024instant}.}
\end{tabular}
\end{table}

\begin{figure}[t!]
    \centering
    \includegraphics[width=1.0\linewidth]{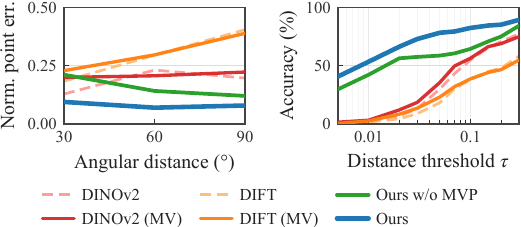}
    \caption{Functional-part point transfer under viewpoint changes.
    We report part point error (left) and the fraction of transferred points within a distance threshold (right).
    The ablation without multi-view prompting (MVP) isolates the effect of cross-view reasoning.}
    \label{fig:semantic_transfer}
\end{figure}

\subsection{Functional-part Point Transfer}
\label{exp:keypoint_transfer}
We evaluate whether our functional-part point transfer remains accurate under large viewpoint and pose changes.

\textbf{Baselines.}
We compare against dense feature matching baselines that rely only on visual similarity.
Single-view baselines lift 2D correspondences to 3D using depth, with features from DINOv2~\cite{oquab2023dinov2} and DIFT~\cite{tang2023emergent}.
Multi-view baselines aggregate 2D features from two views into a 3D feature point cloud following D$^3$Fields~\cite{wang2024d3fields}, then transfer keypoints by nearest neighbor matching.
We pair this aggregation with both DINOv2 and DIFT features.
\method{} is given the object and functional-part names used for prompting; this matches our setting, where the task specifies which part should transfer, and lets us isolate whether name-based part grounding improves correspondence under pose change.
We also ablate multi-view prompting by querying Rex-Omni~\cite{jiang2025detect} independently on each view.

\textbf{Evaluation setup.} 
As no standard evaluation exists for multi-view functional-part point transfer, we construct a controlled setting.
We render objects from three categories (mug, pan, scissors) at 12 viewpoints spaced $30^\circ$ apart on a circle around the object.
For each object, we treat one view as the demonstration and evaluate target views at azimuth offsets from $30^\circ$ to $90^\circ$.

\textbf{Quantitative results.}
Fig.~\ref{fig:semantic_transfer} reports normalized point error in units of the object's longest bounding-box dimension and the fraction of transferred points within a distance threshold.
Ours achieves the lowest point error at every angular offset and the highest correct-point fraction at every threshold.
This indicates that grounding the functional part by name is more robust to pose change than matching dense visual features alone.
The per-view ablation confirms that multi-view prompting suppresses the false detections described in Sec.~\ref{method:point_trajectory_transfer} (Fig.~\ref{fig:tvp}).

\begin{figure}[t!]
    \centering
    \includegraphics[width=1.0\linewidth]{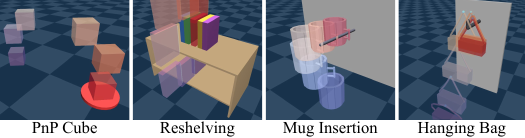}
    \caption{Demonstration sequences for the control tasks. Object motion is rendered from blue to red.}
    \label{fig:control_task}
\end{figure}

\begin{figure}[t!]
    \centering
    \includegraphics[width=1.0\linewidth]{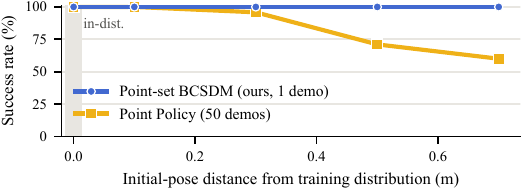}
    \caption{Success rate on Mug Insertion versus the initial pose's xy distance to the training region (20\,cm bins; shaded: in-distribution).
The 50-demo Point Policy degrades with distance, while ours stays at 100\%.}
    \label{fig:control_plot}
\end{figure}

\subsection{Pose-Robust Control with Point-set BCSDM}
\label{exp:control}
We evaluate the core control property of Point-set BCSDM: reaching the demonstrated goal from poses far outside the demonstration distribution.
To isolate control from perception and manipulation constraints, all policies receive ground-truth point trajectories and directly actuate the object.
We compare against two learned baselines.
Point Policy~\cite{haldar2025point} is a behavior cloning method with a point representation; with a single demonstration it memorizes the world-frame trajectory.
Instant Policy~\cite{vosylius2024instant} conditions on the demonstration through a motion prior learned from a large synthetic dataset dominated by linear interpolation between waypoints.

We evaluate on four tasks of increasing difficulty (Fig.~\ref{fig:control_task}).
\textit{Pick-and-Place Cube} (PnP Cube) requires a single rigid translation.
\textit{Reshelving} requires pulling a book out along a specific direction and placing it on an upper shelf.
\textit{Mug Insertion} requires inserting a mug handle onto a rack, which demands following the demonstration trajectory rather than taking a shortcut to the goal pose.
\textit{Hanging Bag} tests extension to a slightly deformable object. 
This is feasible for all three methods because they rely on points rather than object pose.

For each task, we sample initial poses from two regimes with full yaw range $[-180^\circ, 180^\circ]$ and different xy extent.
\textit{Near} draws the xy position from a compact box around the demonstration start.
\textit{Far} extends this box, rejection-sampling every pose to lie outside the \textit{Near} box.
Demonstrations are collected in \textit{Near} for 50-demo Point Policy.
The success criteria depend on task type. 
For \textit{PnP Cube} and \textit{Reshelving}, a trial succeeds if the object center is within 1\,cm of the demonstration goal. 
For \textit{Mug Insertion} and \textit{Hanging Bag}, a trial succeeds if the average distance between current and demonstration goal points is below 1\,cm.

Table~\ref{tab:control} reports success rates.
All three methods solve \textit{PnP Cube}, which reduces to a rigid translation.
Instant Policy fails on \textit{Reshelving} and \textit{Mug Insertion} despite solving \textit{PnP Cube}, so the integration itself is functional.
We attribute this to distribution shift, as its motion prior is trained on RLBench~\cite{james2020rlbench} scenes with near-linear trajectories, though we did not isolate which factor dominates.
Point Policy follows the demonstration when poses stay near it, but with one demonstration it lacks a cue for the current timestep.
Under out-of-distribution poses, it often drives toward the final pose rather than the nearest trajectory configuration, colliding with the wall or rod.
With 50 demonstrations, success remains high near the training distribution but drops with distance (Fig.~\ref{fig:control_plot}), whereas Point-set BCSDM maintains 100\% success across all distances from a single demonstration.

\subsection{End-to-End evaluation}
\label{exp:e2e}
We evaluate this full pipeline end to end under the main variations from the introduction: object displacement, grasp change, goal-geometry change, and cross-category transfer.
In simulation, we use MuJoCo~\cite{todorov2012mujoco} with a 7-DoF Franka Panda arm.
We also evaluate the hanging-rack task on hardware (Fig.~\ref{fig:realworld_exp}).

\begin{figure}[t!]
    \centering
    \includegraphics[width=1.0\linewidth]{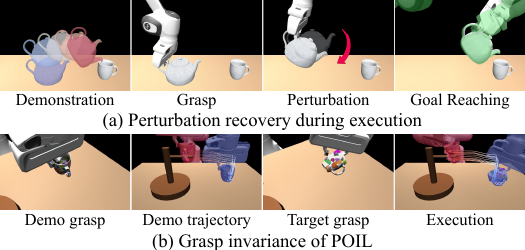}
    \caption{End-to-end execution under object-centric closed-loop control.
    Object motion is rendered from blue to red.
    (a) Teapot pouring with perturbation recovery.
    (b) Mug hanging with a side-lying target mug and a different feasible grasp.}
    \label{fig:end2end}
\end{figure}

\textbf{Object-centric closed-loop control.}
Fig.~\ref{fig:end2end}~(a) shows perturbation recovery in a teapot-pour task; in the final panel, the green overlay marks the state reached without perturbation.
After a displacement, the gripper ends at a different pose while the teapot returns to the same final pour configuration.
In Fig.~\ref{fig:end2end}~(b), the upright mug-hanging demonstration is transferred to a side-lying mug, where the demonstrated rim grasp is blocked by collision and the robot must choose a feasible grasp on the current object pose.
Because control is defined on object points rather than the gripper pose, \method{} still drives the handle to the rack from this different grasp.

\begin{figure}[t!]
    \centering
    \includegraphics[width=1.0\linewidth]{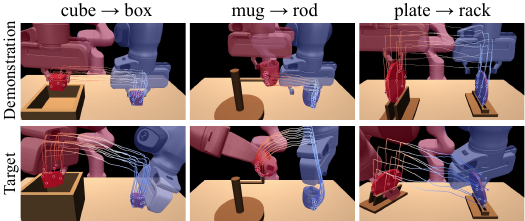}
    \caption{Trajectory adaptation under goal-geometry changes: a cube placed in a taller box, a mug hung on an L-hook rod, and a plate seated in a tilted rack slot.
Top: demonstration. Bottom: execution along the trajectory adapted to the target fixture.}
    \label{fig:goal_adaptation}
\end{figure}

\textbf{Goal transfer.}
When the target fixture differs from the demonstration, OAReg and TPS deform the demonstrated functional-part point trajectory to the target scene (Fig.~\ref{fig:tps}).
\method{} then follows this adapted trajectory in closed loop, producing the executions in Fig.~\ref{fig:goal_adaptation}.
We show three cases: a cube lifted higher than the demonstration to clear a taller box, a plate aligned to a tilted rack, and a mug rotated before insertion to align with an L-shaped rod.
In the plate-rack task, the rack also changes position; because OAReg and TPS align the trajectory to the target fixture, the same mechanism handles this spatial shift.

\textbf{Pipeline failure analysis.}
We evaluate the side-lying mug setting in Fig.~\ref{fig:end2end}(b) over $50$ randomized yaw angles.
POIL completes $39$ trials.
Among the $11$ failures, $7$ result from part-detection failure, and $3$ have no valid grasp after filtering.
The remaining failure is a mug--floor collision, which is not captured by the current gripper-trajectory feasibility check.
These results suggest that part detection under self-occlusion and finding a valid grasp, rather than the downstream point-based controller, are the main bottlenecks in the full pipeline.
Representative executed failures are shown in Fig.~\ref{fig:failure_analysis}.

\begin{figure}[t]
    \centering
    \includegraphics[width=\linewidth]{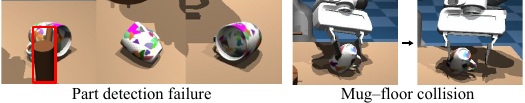}
    \caption{Representative failures in the side-lying mug evaluation. Left: the mug is detected in all views, but the hidden handle causes Rex-Omni~\cite{jiang2025detect} to ground the rack as the handle (red box). Right: a feasible initial grasp leads to mug-floor collision, outside the current feasibility check.}
    \label{fig:failure_analysis}
\end{figure}

\begin{figure}[t]
    \centering
    \includegraphics[width=\linewidth]{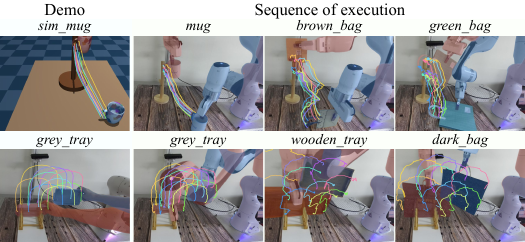}
    \caption{Real-world cross-category transfer from simulation (top) and human hand demonstration (bottom).
    Robot and object motion is rendered from blue (initial) to red (final), with tracked point trajectories overlaid.}
    \label{fig:realworld_exp}
\end{figure}

\begin{figure}[t]
    \centering
    \includegraphics[width=\linewidth]{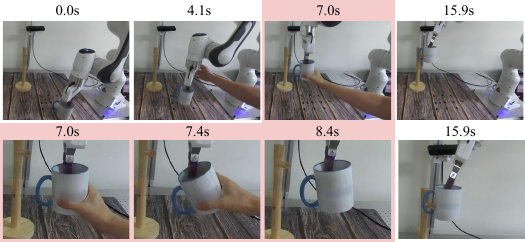}
    \caption{Disturbance recovery during sim-to-real mug hanging. 
    A human pulls the gripper and rotates the mug within the grasp (red; intermediate frames below), yet the handle reaches the demonstrated goal (bottom right).}
    \label{fig:realworld_disturbance}
\end{figure}

\textbf{Real-world cross-category transfer.}
We deploy \method{} on a 7-DoF Franka Emika Panda arm with two ZED 2i depth cameras.
In this setup, the mug is too wide to grasp from the outside, and depth reconstructs its rim poorly.
The flat tray and bags need a human to prop them up before any grasp.
We therefore isolate grasping on hardware: the operator places the object in the gripper at an arbitrary grasp, and we evaluate the rest of the pipeline.
Neither demonstration contains a gripper, as the simulated mug is actuated directly and the tray is moved by a human hand.
Fig.~\ref{fig:realworld_exp} shows the executions.
The top row transfers a simulated mug-hanging demonstration to a real mug, a brown handbag, and a green paper bag lying on the table.
The brown handbag is an interesting case: its elongated handle makes the coarse alignment rotate the transferred points by about 90$^\circ$, yet the execution still hangs the bag, since the task tolerates the handle's in-plane orientation.
The bottom row transfers a human-hand demonstration of placing a grey tray in a dish rack to the same tray, a wooden tray, and a dark bag.
Over 10 sim-to-real mug trials, \method{} succeeds in 6, with 3 tracking failures and 1 infeasible IK solution, and the other transfers are shown qualitatively.
We also add disturbances to this transfer (Fig.~\ref{fig:realworld_disturbance}). 
Even after the mug is rotated within the grasp, the tracked points let the controller drive the handle back to the demonstrated goal.

\section{CONCLUSION AND LIMITATIONS}
We introduced \method{}, which employs functional-part points as a unified representation for one-shot trajectory transfer and stable closed-loop execution.
This closes a gap left by prior trajectory transfer methods that execute open loop and by pose-based dynamical systems that assume a known object-pose or fixed grasp.
In simulation and on a real robot, \method{} transfers a single demonstration across object category, grasp pose, and goal geometry, while recovering from external disturbances.

Several assumptions remain.
First, we assume object and functional-part queries for detection, a few user-clicked correspondences when the fixture changes, and geometrically similar source and target parts for point correspondence.
Second, execution requires the functional-part points to remain trackable; large motions outside the camera views, severe occlusion, or unreliable depth on transparent or specular surfaces can break tracking.
Third, our experiments employ a parallel gripper and a single grasp, which can result in no collision-free, IK-feasible grasp under large pose changes, or no graspable configuration for flat objects.
Future work could divide long transfers into sequential stages using regrasping, or use dexterous hands to reorient the object in hand.

\bibliographystyle{IEEEtran}
\bibliography{references}

\end{document}